\DeclareMathOperator{\slope}{slope}
\DeclareMathOperator{\blkdiag}{blkdiag}
\DeclareMathOperator{\chol}{chol}
\DeclareMathOperator{\diag}{diag}
\DeclareMathOperator{\circo}{circ}
\DeclareMathOperator{\veco}{vec}
\newtheorem{theorem}{\bf Theorem}
\newtheorem{definition}[theorem]{\bf Definition}
\newtheorem{lemma}[theorem]{\bf Lemma}
\newtheorem{remark}[theorem]{\bf Remark}
\newcommand{\bbR}{\mathbb{R}}
\newcommand{\bbS}{\mathbb{S}}
\def\1{\bm{1}}
\def\mI{{\bm{I}}}
\def\mL{{\bm{L}}}
\def\mM{{\bm{M}}}
\def\mQ{{\bm{Q}}}
\def\mR{{\bm{R}}}
\def\mS{{\bm{S}}}
\def\mX{{\bm{X}}}
\DeclareMathAlphabet{\mathsfit}{\encodingdefault}{\sfdefault}{m}{sl}
\SetMathAlphabet{\mathsfit}{bold}{\encodingdefault}{\sfdefault}{bx}{n}
\title{\LARGE \bf
Neural network training under semidefinite constraints
}
\author{Patricia Pauli$^{1}$, Niklas Funcke$^{1}$, Dennis Gramlich$^{2}$, Mohamed Amine Msalmi$^{1}$ and Frank Allgöwer$^{1}$
\thanks{*This work was funded by Deutsche Forschungsgemeinschaft (DFG, German Research Foundation) under Germany's Excellence Strategy - EXC 2075 - 390740016 and under grant 468094890.}
\thanks{$^{1}$Patricia Pauli, Niklas Funcke, Mohamed Amine Msalmi and Frank Allgöwer are with the Institute for Systems Theory and Automatic Control, University of Stuttgart, 70569 Stuttgart, Germany
        {\tt\small patricia.pauli@ist.uni-stuttgart.de}}%
\thanks{$^{2}$Dennis Gramlich is with the Chair of Intelligent Control Systems, RWTH Aachen, 52074 Aachen, Germany}%
}
\begin{document}

\maketitle
\thispagestyle{empty}
\pagestyle{empty}

\begin{abstract}
This paper is concerned with the training of neural networks (NNs) under semidefinite constraints, which allows for NN training with robustness and stability guarantees. In particular, we focus on Lipschitz bounds for NNs. Exploiting the banded structure of the underlying matrix constraint, we set up an efficient and scalable training scheme for NN training problems of this kind based on interior point methods. Our implementation allows to enforce Lipschitz constraints in the training of large-scale deep NNs such as Wasserstein generative adversarial networks (WGANs) via semidefinite constraints. In numerical examples, we show the superiority of our method and its applicability to WGAN training.
\end{abstract}

\section{Introduction}
Neural networks (NNs) are successfully applied to a broad range of applications. Yet, due to the prevailing lack of guarantees and the black box nature of NNs, engineers still hesitate to use them in safety critical applications, making NN safety and stability an active field of research. Remedies in this respect include, e.g., constraint violation penalties, constraint enforcing architecture design, or data augmentation \cite{stewart2017label,marquez2017imposing}. Opposed to these approaches which lack rigorous mathematical guarantees, we propose an efficient and scalable training scheme for NNs with guarantees, i.e., the resulting NNs satisfy stability and robustness certificates that are captured by semidefinite constraints.

The use and analysis of NNs in control-specific problems, such as nonlinear system identification and controller design, has been promoted for a while \cite{suykens1995artificial,levin1993control}. Recently, \cite{fazlyab2020safety} suggested an analysis of NNs using semidefinite programming (SDP), based on the over-approximation of the underlying sector-bounded and slope-restricted activation functions using quadratic constraints (QCs). \cite{fazlyab2019efficient} used this approach to analyze the robustness of NNs by determining accurate upper bounds on their Lipschitz constant. Following the same ideas, \cite{yin2021stability,pauli2021offset,hashemi2021certifying,nikolakopoulou2020feedback}  addressed closed-loop stability of feedback systems that include an NN, e.g., as an NN controller. Besides in analysis, semidefinite constraints have been used for the training of Lipschitz-bounded NNs \cite{pauli2021training} and  recurrent neural networks \cite{revay2020convex,pauli2022robustness} and for imitation learning to approximate controllers \cite{yin2021imitation}. The present work aims to formalize NN training subject to semidefinite constraints and moreover, to provide an efficient and scalable training method for problems of that kind. Prior works like \cite{pauli2021training} and \cite{yin2021imitation} use the alternating directions method of multipliers (ADMM) for NN training under linear matrix inequality (LMI) constraints, leading to a significant increase in training time in comparison to the unconstrained problem. To improve on these existing training schemes, we make use of barrier functions to include the semidefinite constraints in the training loss, similar to \cite{revay2020convex}, and then train the resulting unconstrained problem via backpropagation. Recently, efforts were made to improve the scalability of SDP-based neural network verification methods exploiting the sparse structure of the certification matrix \cite{newton2021exploiting}. Similarly, in the specific case of enforcing Lipschitz bounds, we exploit the block-banded matrix structure to accelerate the training, yielding an efficient and robust training procedure with good scalability to deep NNs with convolutional layers, as e.g. used in Wasserstein generative adversarial networks (WGANs).

Generative adversarial networks (GANs) can generate fake data that are impressively similar to real-world data \cite{goodfellow2014generative}. However, original GAN training struggled with vanishing gradients and was hence developed further, yielding WGAN training \cite{arjovsky2017wasserstein}. Here, the Wasserstein distance is used instead of the Jenson-Shannon distance to measure the distance between the probabilities of the real and the fake data. The Kantorovich-Rubinstein duality renders the training objective computational, where an optimization problem over 1-Lipschitz continuous NNs is used to estimate the Wasserstein-1 distance. This Lipschitz constraint in WGAN training was first realized by weight clipping which is highly conservative, resulting in NNs with unnecessarily low Lipschitz constants. \cite{gulrajani2017improved} suggested a gradient penalty method instead that in turn provides no guarantees. Using a semidefinite constraint to enforce the Lipschitz constraint and applying our training scheme can guarantee the Lipschitz condition in a less conservative fashion, i.e., the underlying NNs have a Lipschitz constant just below $1$.

The contribution of this paper is twofold. On the one hand, we provide an efficient and scalable training scheme for NNs guaranteeing certificates described by semidefinite constraints, which we in particular show for the example of Lipschitz continuity. Herein, we exploit the banded structure of the matrices of these semidefinite constraints. On the other hand, we apply our training scheme to the popular example of WGAN training which extends the method to large-scale structures that include convolutional layers. The remainder of this paper is organized as follows. In Section~\ref{sec:problem} we state the problem setup. In Section~\ref{sec:training} we introduce the training scheme and finally, in Section~\ref{sec:experiments} we show the advantages of our method in comparison to prior work on Lipschitz-bounded NNs and we illustrate its applicability to WGAN training.

{\bf Notation:} $\mathbb{S}^{n}$ ($\mathbb{S}_{++}^{n}$) denotes the set of $n$-by-$n$ symmetric (positive definite) matrices and $\mathbb{D}_+^{n_i}$ denotes the set of diagonal matrices with nonnegative entries, i.e., $\mathbb{D}_+^{n_i}\coloneqq\{X\in\mathbb{R}^{n_i\times n_i}\mid X=\diag(\lambda),\lambda\in\mathbb{R}^{n_i},\lambda_i\geq 0\}$.

\section{Problem statement}\label{sec:problem}
In this paper, we address the NN training problem
\begin{equation}\label{eq:training_problem}
    \min_{\theta,\kappa} ~ \mathcal{L}(f_\theta) \quad \mathrm{s.\,t.} ~ \mM_j(\theta,\kappa)\succeq 0, \quad j=0,\dots,q.
\end{equation}
We assume that the parametric function $f_\theta:\bbR^{n_0}\to\bbR^{n_{l+1}}$ is a feedforward NN with $l$ hidden layers
\begin{equation}\label{eq:NN}
\begin{split}
w^0&=x,\\w^i& =\phi_{i}(W_{i-1}w^{i-1}+b_{i-1}),~i=1,\dots,l,\\
f_\theta(x)&=W_lw^l+b_l,
\end{split}
\end{equation}
where the parameter $\theta = (W_i,b_i)_{i = 0}^l$ collects all weight matrices $W_i \in \mathbb{R}^{n_{i+1}\times n_{i}}$ and all biases $b_i\in\mathbb{R}^{n_{i+1}}$. The function $\mathcal{L}(f_\theta)$ is the loss function of this training problem, e.g., the mean squared error or the cross-entropy loss with respect to some dataset, and the layerwise activation functions are denoted by $\phi_i :\mathbb{R}^{n_i}\to \mathbb{R}^{n_i}$. These notions are standard in deep learning except for the matrix inequality constraints $\mM_j(\theta,\kappa)\succeq 0$, $j=1,\dots,q$, where the parameter $\kappa$ holds any additional decision variables not already included in $\theta$.

\subsection{Semidefinite constraints in neural network training}\label{sec:examples}
In this section, we present LMI certificates for Lipschitz continuity for NNs and further introduce Wasserstein GANs. After introducing the specific examples, we establish certificates for general behavioral properties based on quadratic constraints and outline the underlying theory.

\subsubsection{Training of robust NNs}\label{sec:ex_robust}
The robustness of an NN is typically measured by its Lipschitz constant $L^*$ \cite{szegedy2013intriguing}, which is the smallest nonnegative number $L \in \mathbb{R}$ for which
\begin{equation*}\label{eq:Lip}
    \left\lVert f_\theta(x_1)-f_\theta(x_2)\right\rVert\leq L \left\rVert x_1-x_2\right\rVert\quad\forall x_1,x_2\in\mathbb{R}^{n_0}
\end{equation*}
or equivalently
\begin{equation*}\label{eq:qc_Lipschitz}
    \begin{bmatrix}
        x_1-x_2 \\
        f_\theta(x_1) - f_\theta(x_2) 
    \end{bmatrix}^\top
    \begin{bmatrix}
        L^2 \mI & 0 \\
        0 & -\mI
    \end{bmatrix}
    \begin{bmatrix}
        x_1-x_2 \\
        f_\theta(x_1) - f_\theta(x_2)
    \end{bmatrix}\geq 0
\end{equation*}
holds. A certificate for $L$-Lipschitz continuity of the NN $f_\theta$ is given by the semidefinite constraint $\mM(\theta,\kappa) =$
\begin{equation} \label{eq:lipschitz_matrix_inequality}
\begin{split}
    \left[\begin{array}{cccccc}
        \!L^2 \mI\!             &  \!\!-W_0^\top \Lambda_1\!\!  &   0          &   \cdots      &     0  \\
        \!\!-\Lambda_1 W_0\!\!    &  \!2\Lambda_1\!             &   \ddots         &  \ddots   &  \vdots           \\
        0            &  \ddots                      & \ddots            &  \!-W_{l-1}^\top \Lambda_l\!  & 0             \\
        \vdots        &   \ddots                 &  \!\!-\Lambda_l W_{l-1}\!\!     &  \!2\Lambda_l\phantom{\vdots}\!          & \!-W_l^\top\!     \\
        0                 &       \cdots   &  0                &  -W_l        & \phantom{\vdots}\mI\phantom{\vdots} 
    \end{array}\right]\!\succeq\!0,
\end{split}
\end{equation}
where $\kappa = (\Lambda_1,\ldots,\Lambda_l)$ with $\Lambda_i \in \mathbb{D}^{n_i}_+,~i=1,\dots,l$ \cite{pauli2021training}.

\subsubsection{Wasserstein GANs}\label{sec:ex_GANs}
Let $f_{\theta_1}$ and $g_{\theta_2}$ be two NNs, of which we call $f_{\theta_1}$ the discriminator and $g_{\theta_2}$ the generator. The WGAN training problem on a dataset $\mathcal{D} = \{x_1,\ldots,x_N\}$ is
\begin{subequations}\label{eq:WGAN_prob}
\begin{align}
    \min_{\theta_2} & \max_{\theta_1} ~\frac{1}{N} \sum_{x\in \mathcal{D}} f_{\theta_1}(x) - \mathbb{E}_{z\sim \mathcal{N}(0,\mI)} f_{\theta_1}(g_{\theta_2}(z))\\
    \mathrm{s.\,t.} & \left\lVert f_{\theta_1}(x_1)-f_{\theta_1}(x_2)\right\rVert\leq \left\rVert x_1-x_2\right\rVert~\forall x_1,x_2\in\mathbb{R}^{n_0}.\label{eq:WGAN_constraint}
\end{align}
\end{subequations}
Here, we maximize over the weights of the discriminator, i.e., the goal of $f_{\theta_1}$ is to distinguish data in $\mathcal{D}$ from artificially generated data $g_{\theta_2} (z)$, $z \sim \mathcal{N}(0,\mI)$. At the same time, we minimize over the weights of the generator, i.e., the generator should generate an output which is as indistinguishable as possible from the data in $\mathcal{D}$. Notice that the constraint~\eqref{eq:WGAN_constraint} specifies that $f_{\theta_1}$ must be $1$-Lipschitz continuous. This constraint \eqref{eq:WGAN_constraint} distinguishes Wasserstein GANs \cite{arjovsky2017wasserstein} from ordinary GANs \cite{goodfellow2014generative} and makes the training of the latter more stable. To guarantee Lipschitz continuity of the discriminator using \eqref{eq:lipschitz_matrix_inequality}, we solve the training problem
\begin{equation}\label{eq:WGAN}
    \begin{split}
    \min_{\theta_2} \max_{\theta_1,\kappa} ~& ~ \frac{1}{N} \sum_{x\in \mathcal{D}} f_{\theta_1}(x) - \mathbb{E}_{z\sim \mathcal{N}(0,\mI)} f_{\theta_1}(g_{\theta_2}(z))\\
    \mathrm{s.\,t.}~&~\mM(\theta_1,\kappa)\succeq 0,
    \end{split}
\end{equation}
where $\mM(\theta_1,\kappa) \succeq 0$ corresponds to the matrix inequality \eqref{eq:lipschitz_matrix_inequality} for Lipschitz continuity. Note that the discriminator contains convolutional layers that can be posed as feedforward layers using the following lemma \cite{sedghi2018singular}.
\begin{lemma}\label{lem:conv}
The linear transform for the convolution
\begin{align*}
    Y_{k,l} &= \sum_{i = 0}^{n} \sum_{j = 0}^{m} K_{i,j} X_{k+i,l+j} \quad \forall k,l.
\end{align*}
with filter $K$, input $X\in\bbR^{n\times m}$ and corresponding output $Y$ is expressed by the following doubly block circulant matrix
\begin{equation*}
    D=\begin{bmatrix}
        \circo(K_{0,:})   & \circo(K_{1,:}) & \dots  & \circo(K_{n-1,:})\\
        \circo(K_{n-1,:}) & \circo(K_{0,:}) & \dots  & \circo(K_{n-2,:})\\
        \vdots            & \vdots          & \vdots & \vdots\\
        \circo(K_{1,:})   & \circo(K_{2,:}) & \dots  & \circo(K_{0,:})\\
    \end{bmatrix}.
\end{equation*}
\end{lemma}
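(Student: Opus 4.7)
The plan is to reduce the two-dimensional convolution to two nested one-dimensional circular convolutions, one along each spatial axis, and then identify the resulting matrix representation as doubly block circulant.

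First, I would fix a row-major vectorization, $\vx = \veco(X^\top)$ and $\vy = \veco(Y^\top)$, so that the coordinates are grouped row-by-row of $X$ and $Y$. For each fixed $i$, the inner sum $\sum_{j} K_{i,j} X_{k+i,\,l+j}$ is a one-dimensional circular convolution of the row $X_{k+i,:}$ with the filter row $K_{i,:}$, and is therefore captured by the matrix-vector product $\circo(K_{i,:})\, X_{k+i,:}^\top$. This is the standard statement that one-dimensional circular convolution with impulse response $h$ corresponds to left-multiplication by the circulant matrix $\circo(h)$.

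Second, summing over $i$ yields $Y_{k,:}^\top = \sum_{i=0}^{n-1} \circo(K_{i,:})\, X_{(k+i)\bmod n,:}^\top$. Re-indexing with $i' = (k+i)\bmod n$, so that $i = (i' - k)\bmod n$, the coefficient of the column block $X_{i',:}^\top$ in the $k$-th output block row becomes $\circo(K_{(i'-k)\bmod n,:})$. Arranging these blocks on an $n\times n$ grid indexed by $(k,i')$ gives a block matrix in which each block-row is a cyclic right-shift of the previous one: for $k=0$ the row is $\circo(K_{0,:}),\circo(K_{1,:}),\ldots,\circo(K_{n-1,:})$; for $k=1$ it is $\circo(K_{n-1,:}),\circo(K_{0,:}),\ldots,\circo(K_{n-2,:})$; and so on. This is precisely the matrix $D$ given in the statement, so $\vy = D\vx$, which is the claim.

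The main obstacle I anticipate is purely bookkeeping: one has to commit to compatible conventions for (i) the vectorization order (row- vs.\ column-major), (ii) the orientation of $\circo(\cdot)$ (first row vs.\ first column equals the generating vector), and (iii) the circular wrap-around implicit in the formula $Y_{k,l}=\sum_{i,j}K_{i,j}X_{k+i,l+j}$, and then confirm that the induced shift pattern matches the one displayed in $D$. Once these three conventions are pinned down consistently, the verification reduces to an entrywise index check; no analytic work beyond that is required.
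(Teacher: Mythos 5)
Your proposal is correct, and it is essentially the only sensible route: the paper itself gives no proof of this lemma, citing Sedghi et al.\ instead, and your reduction of the 2D circular convolution to nested 1D circular convolutions---inner sum over $j$ giving $\circo(K_{i,:})$ acting on row $X_{k+i,:}$, outer sum over $i$ with the re-indexing $i'=(k+i)\bmod n$ producing block $(k,i')$ equal to $\circo(K_{(i'-k)\bmod n,:})$---is exactly the standard argument and reproduces the displayed $D$. The convention issues you flag are real but harmless; the only one worth pinning down explicitly is that the block structure of $D$ presumes a row-major ordering of $\veco(X)$ and a circulant convention in which the first row of $\circo(h)$ is $h$ (matching the correlation form $y_l=\sum_j h_j x_{l+j}$), and that the index ranges in the lemma's sum should be read modulo $n$ and $m$ (the stated upper limits $n$ and $m$ are an off-by-one in the paper, not in your argument).
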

The output of the convolution can be respresented as $\veco(Y)=D\veco(X)$, where $\veco(\cdot)$ vectorizes the matrix argument. 
\begin{remark}
    For ease of exposition, we restrict ourselves to convolutions with one channel. The multi-channel case however carries over accordingly.
\end{remark}

\subsection{Convex relaxation of NNs using quadratic constraints}\label{sec:NN_description}
When designing the semidefinite constraints $\mM_j(\theta,\kappa)\succeq0$, $j=1,\dots,q$, we include information on the NN. In particular, we establish a convex relaxation of the NN, using the fact that the most common activation functions $\varphi:\mathbb{R}\to\mathbb{R}$, such as $\tanh$, sigmoid, and ReLU are slope-restricted.


\begin{definition}
A function $\varphi: \bbR \to \bbR$ is locally (globally) slope-restricted, $\varphi\in\slope[\alpha,\beta]$, if for all $x_1,x_2\in\mathcal{R}\subset\mathbb{R}$ ($x_1,x_2\in\mathbb{R}$)
\begin{equation*}\label{eq:slope_bounds}
    \alpha \leq \frac{\varphi(x_1)-\varphi(x_2)}{x_1-x_2} \leq \beta \quad \forall x_1\neq x_2 .
\end{equation*}
\end{definition}

Based on slope-restriction, we find a convex relaxation of the NN stated as an incremental QC. The key observation is that an NN with slope-restricted activation functions $\varphi\in\sec[\alpha,\beta]$ fulfills the incremental QC
\begin{equation}\label{eq:qc_slope}
    \begin{bmatrix}
        v_1-v_2 \\
        w_1-w_2
    \end{bmatrix}^\top
    \underbrace{\begin{bmatrix}
        2\alpha\beta\boldsymbol{\Lambda} & -(\alpha+\beta)\boldsymbol{\Lambda} \\
        -(\alpha+\beta)\boldsymbol{\Lambda} & 2\boldsymbol{\Lambda}
    \end{bmatrix}}_{=:\mM_\mathrm{NN}^\mathrm{slope}}
    \begin{bmatrix}
        v_1-v_2 \\
        w_1-w_2
    \end{bmatrix}\leq 0
\end{equation}
with multiplier matrix $\boldsymbol{\Lambda}=\blkdiag(\Lambda_1,\dots,\Lambda_l)\in\mathbb{D}^n_+$ \cite{fazlyab2020safety}, where $w_k = [{w_k^1}^\top, \dots, {w_k^l}^\top]^\top \in \mathbb{R}^n$, $k=1,2$ are instances of the outputs and $v_k\in\mathbb{R}^n,~k=1,2$ are instances of the inputs of the $n$ neurons such that $w_k=\phi(v_k)$ with $\phi(v_k)=[{\varphi(v_{k,1})},\dots,{\varphi(v_{k,n})}]^\top$.  


\subsection{Certification}
In Subsection \ref{sec:examples}, we addressed that Lipschitz continuity can be verified through a semidefinite constraint. In general, any behavioural property that is described by an incremental QC
\begin{equation}\label{eq:iQC}
     \begin{bmatrix}
         x_1-x_2 \\
         f_\theta(x_1) - f_\theta(x_2)
     \end{bmatrix}^\top
     \begin{bmatrix}
        \mQ & \mS\\
        \mS^\top & \mR
     \end{bmatrix}
     \begin{bmatrix}
         x_1-x_2 \\
         f_\theta(x_1) - f_\theta(x_2)
     \end{bmatrix}\geq 0,
\end{equation}
for all $x_1,x_2 \in \mathcal{R}^{n_0} \subseteq \mathbb{R}^{n_0}$, can be verified accordingly. 
Here, the incremental QC \eqref{eq:iQC} is characterized by the symmetric matrices $\mQ$ and $\mR$ and matrix $\mS$ of appropriate dimensions, where the choice $\mQ = L^2 \mI$, $\mR=-\mI$, $\mS=0$ describes $L$-Lipschitz continuity. To further formally state the certification condition, we introduce the transformation matrices 
\begin{equation}\label{eq:transformations}
    \begin{split}
        \begin{bmatrix}
            x_1-x_2 \\ f_\theta(x_1)-f_\theta(x_2)
        \end{bmatrix}=
        \underbrace{\left[ \begin{array}{c|cc}
            I & \multicolumn{2}{c} 0 \\ \hline
            0 &  0 & W_l
        \end{array}\right]}_{=:T_f}
        \begin{bmatrix}
            x_1-x_2 \\ w_1-w_2
        \end{bmatrix},\\
        \begin{bmatrix}
            v_1-v_2 \\ w_1-w_2
        \end{bmatrix}=
        \underbrace{
        \left[ \begin{array}{c|cc}
            W_0 & 0 & 0\\
            0 & N & 0\\\hline
            0 & \multicolumn{2}{c}I
        \end{array}\right]}_{=:T_\mathrm{NN}}
        \begin{bmatrix}
            x_1-x_2 \\ w_1-w_2
        \end{bmatrix},
    \end{split}
\end{equation}
where $N=\blkdiag(W_1,\dots,W_{l-1})$. Note that in the chosen incremental setup the bias terms cancel out and hence do not appear in~\eqref{eq:transformations}.

\begin{theorem}\label{thm:certificaion}
We consider an NN \eqref{eq:NN} with slope-restricted activation functions $\phi\in\slope[\alpha,\beta]$. If for some given matrices $\mQ\in\bbS^{n_0}$, $\mR\in\bbS^{n_{l+1}}$, $\mS\in\bbR^{n_0\times n_{l+1}}$, there exists  $\boldsymbol{\Lambda}\in\mathbb{D}_+^{n}$ such that
\begin{equation}\label{eq:matrix_inequality}
    T_f^\top      \begin{bmatrix}
        \mQ & \mS\\
        \mS^\top & \mR
     \end{bmatrix} T_f+ T_\mathrm{NN}^\top
     \mM_{\mathrm{NN}}^{\mathrm{slope}}
     T_\mathrm{NN}\succeq 0
\end{equation}
holds, then the property \eqref{eq:iQC} is satisfied for the NN \eqref{eq:NN}.
\end{theorem}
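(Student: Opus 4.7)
The plan is a straightforward S-procedure argument: right- and left-multiply the matrix inequality \eqref{eq:matrix_inequality} by the stacked vector of input and hidden-layer differences, then exploit the slope-restriction QC \eqref{eq:qc_slope} to discard the unwanted term. The transformations $T_f$ and $T_{\mathrm{NN}}$ are specifically constructed so that this single multiplication reproduces exactly the two quadratic forms that appear in \eqref{eq:iQC} and \eqref{eq:qc_slope}.

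Concretely, I would first fix two arbitrary inputs $x_1,x_2\in\mathcal{R}^{n_0}$, propagate them through \eqref{eq:NN} to obtain the layerwise pre-activations $v_k^i=W_{i-1}w_k^{i-1}+b_{i-1}$ and post-activations $w_k^i=\phi_i(v_k^i)$ for $k=1,2$, and stack them into $v_k,w_k\in\mathbb{R}^n$ as in Subsection~\ref{sec:NN_description}. Define the test vector
\begin{equation*}
    \xi \coloneqq \begin{bmatrix} x_1-x_2 \\ w_1-w_2 \end{bmatrix}.
\end{equation*}
The next step is to verify, directly from \eqref{eq:transformations}, that $T_f\,\xi = [\,x_1-x_2;\; f_\theta(x_1)-f_\theta(x_2)\,]$ and $T_{\mathrm{NN}}\,\xi = [\,v_1-v_2;\; w_1-w_2\,]$. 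Both identities use only that $f_\theta(x_1)-f_\theta(x_2)=W_l(w_1^l-w_2^l)$ and $v_k^i-v_k^i$ can be written in terms of $W_{i-1}(w_k^{i-1}-w_k^{i-1})$, with all bias terms cancelling in the incremental formulation as already noted after \eqref{eq:transformations}.

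Now multiply \eqref{eq:matrix_inequality} from both sides by $\xi$. The first summand yields
\begin{equation*}
    \xi^\top T_f^\top \begin{bmatrix}\mQ & \mS\\ \mS^\top & \mR\end{bmatrix} T_f\,\xi = \begin{bmatrix} x_1-x_2\\ f_\theta(x_1)-f_\theta(x_2)\end{bmatrix}^\top \begin{bmatrix}\mQ & \mS\\ \mS^\top & \mR\end{bmatrix} \begin{bmatrix} x_1-x_2\\ f_\theta(x_1)-f_\theta(x_2)\end{bmatrix},
\end{equation*}
which is the expression whose nonnegativity is required by \eqref{eq:iQC}. The second summand equals $[v_1-v_2;\,w_1-w_2]^\top \mM_{\mathrm{NN}}^{\mathrm{slope}}[v_1-v_2;\,w_1-w_2]$, which by the slope-restriction incremental QC~\eqref{eq:qc_slope} is nonpositive for any $\boldsymbol{\Lambda}\in\mathbb{D}^n_+$, since $w_k=\phi(v_k)$ with $\phi\in\slope[\alpha,\beta]$.

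Putting the two pieces together, the assumption $T_f^\top[\cdots]T_f+T_{\mathrm{NN}}^\top \mM_{\mathrm{NN}}^{\mathrm{slope}}T_{\mathrm{NN}}\succeq 0$ implies
\begin{equation*}
    \xi^\top T_f^\top\begin{bmatrix}\mQ & \mS\\ \mS^\top & \mR\end{bmatrix}T_f\,\xi \;\geq\; -\,\xi^\top T_{\mathrm{NN}}^\top \mM_{\mathrm{NN}}^{\mathrm{slope}} T_{\mathrm{NN}}\,\xi \;\geq\; 0,
\end{equation*}
which is precisely \eqref{eq:iQC} evaluated at the arbitrary pair $(x_1,x_2)$. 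Since $x_1,x_2$ were arbitrary, this completes the proof. The only real work is the bookkeeping verification that $T_f$ and $T_{\mathrm{NN}}$ act on $\xi$ as claimed; once that is in place, the S-procedure step is immediate, and I do not expect any genuine obstacle beyond careful indexing of the layer blocks.
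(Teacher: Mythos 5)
Your proposal is correct and takes essentially the same route as the paper's own proof: both left- and right-multiply \eqref{eq:matrix_inequality} by the stacked vector $[\,x_1-x_2;\;w_1-w_2\,]$, use the transformations \eqref{eq:transformations} to recover the two quadratic forms, and then invoke the incremental slope-restriction QC \eqref{eq:qc_slope} to discard the nonpositive term and conclude \eqref{eq:iQC}. Your write-up is, if anything, slightly more careful than the paper's in making the sign bookkeeping and the action of $T_f$ and $T_{\mathrm{NN}}$ on the test vector explicit.
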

\begin{proof}
We left and right multiply (\ref{eq:matrix_inequality}) with $\begin{bmatrix}x_1^\top-x_2^\top & w_1^\top-w_2^\top\end{bmatrix}^\top$ and its transpose, respectively, and with \eqref{eq:transformations}, we obtain
\begin{equation}\label{eq:qc_sum}
\begin{split}
    \begin{bmatrix}
        x_1-x_2 \\ f_\theta(x_1)-f_\theta(x_2)
    \end{bmatrix}^\top
     \begin{bmatrix}
        \mQ & \mS\\
        \mS^\top & \mR
     \end{bmatrix}
    \begin{bmatrix}
        x_1-x_2 \\ f_\theta(x_1)-f_\theta(x_2)
    \end{bmatrix}\\
    +\begin{bmatrix}
        v_1-v_2 \\ w_1-w_2
    \end{bmatrix}^\top
    \mM_\mathrm{NN}^{\mathrm{slope}}
    \begin{bmatrix}
        v_1-v_2 \\ w_1-w_2
    \end{bmatrix}\geq 0.
    \end{split}
\end{equation}
Given that, using slope-restricted activation functions, \eqref{eq:qc_slope} holds by design of the NN, we add \eqref{eq:qc_slope} to \eqref{eq:qc_sum}, yielding \eqref{eq:iQC} which thus holds. 
\end{proof}
Note that the matrix inequality that verifies Lipschitz continuity \eqref{eq:lipschitz_matrix_inequality} is an instance of \eqref{eq:matrix_inequality} with $\mQ = L^2 \mI$, $\mR=-\mI$, $\mS=0$, $\alpha=0$, $\beta=1$, in addition requiring the application of the Schur complement for the convexification in $W_l$, cf. \cite{pauli2021training}. 


Based on Theorem~\ref{thm:certificaion}, we can enforce any desired property described by \eqref{eq:iQC} by including the corresponding matrix inequality constraint \eqref{eq:matrix_inequality} in the optimization problem \eqref{eq:training_problem} used for NN training.
\begin{remark}
Beside Lipschitz continuity, another relevant and interesting behavioural property clearly is stability of a dynamical system that includes an NN nonlinearity. Such stability constraints can be formulated as semidefinite constraints \cite{yin2021imitation}.
\end{remark}

\section{Training scheme}\label{sec:training}
In the following, we propose a training scheme to efficiently solve \eqref{eq:training_problem}, using the well-known log-det barrier function for $\mM_j(\theta,\kappa) \succ 0$, $j=1,\dots,q$ to transform \eqref{eq:training_problem} into the unconstrained optimization problem
\begin{equation}\label{eq:barrier}
\min_{\theta,\kappa}\, \mathcal{L}(f_\theta)-\sum_{j=0}^q\rho_j\log\det (\mM_j(\theta,\kappa))=\min_{\theta,\kappa}\, \mathcal{L}_\mM(\theta,\kappa),
\end{equation}
where $\rho_j>0$ are barrier parameters, that can be decreased for increasing iterations to gradually get a better approximation of the indicator function \cite{potra2000interior}. If the semidefinite constraint $\mM_j(\theta,\kappa)$ corresponds to the Lipschitz constraint \eqref{eq:lipschitz_matrix_inequality}, we exploit the banded structure of the matrix to accelerate the training, cf. Section \ref{sec:structure_exploit}.

The training based on \eqref{eq:barrier} can be carried out using backpropagation, where every first-order optimization method may be used to minimize the objective function $\mathcal{L}_{\mM}$. To this end, the gradients of $\mathcal{L}_{\mM}$ with respect to the decision variables $\theta$ and $\kappa$ can be determined analytically. Yet, we need to ensure initial feasibility as well as feasibility after every update step.
\begin{remark}
    The ADMM-based approaches suggested in \cite{pauli2021training,yin2021imitation} are restricted to LMIs, as the ADMM algorithm includes solving SDPs. Using barrier functions, as suggested in this paper, we may include nonlinear matrix inequality constraints, as well. Note that, e.g., the constraint \eqref{eq:lipschitz_matrix_inequality} is not jointly convex in the decision variables $\boldsymbol{\Lambda}$ and $\theta$.
\end{remark}

\subsection{Feasibility}\label{sec:feasible_initialization}
To start the training, the initial values $\theta_0$, $\kappa_0$, must satisfy the inequality constraints $\mM_j(\theta_0,\kappa_0)\succ~0$, $j=1,\dots,q$. For LMIs, a projection of the decision variables into the feasible set can be computed by solving an SDP. Yet, in the case that the semidefinite constraint is nonlinear in the decision variables, finding a feasible initialization is more challenging. Some applications, like Lipschitz-bounded NNs, admit intuitive ways for initialization.  To this end, we can exploit that given a fixed value of $L>0$, NNs with sufficiently small weights $W_i,~i=1,\dots,l$ are $L$-Lipschitz continuous.

To check feasibility of $\mM(\theta_{k+1},\kappa_{k+1})\succ 0$ after each update step, we make use of the following lemma.
\begin{lemma}[Cholesky decomposition]\label{lem:cholesky}
A matrix $\mX\!\in\!\mathbb{R}^{n\times n}$ admits a Cholesky factorization $\mX~\!\!=~\!\!\mL\mL^\top$ with a lower triangular, invertible matrix $\mL\in\mathbb{R}^{n\times n}$, if and only if $\mX$ is symmetric and positive definite. 
\end{lemma}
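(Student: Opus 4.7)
The plan is to prove both directions of this well-known result, which is purely a linear-algebra fact independent of the neural-network context.

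For the easy ($\Leftarrow$) direction, I would assume $\mX = \mL\mL^\top$ with $\mL$ lower triangular and invertible and verify the two properties directly. Symmetry follows from $\mX^\top = (\mL\mL^\top)^\top = \mL\mL^\top = \mX$. Positive definiteness follows from $v^\top \mX v = v^\top \mL \mL^\top v = \|\mL^\top v\|^2$, which is nonnegative and vanishes only when $\mL^\top v = 0$; since $\mL$ is invertible, so is $\mL^\top$, and this forces $v = 0$.

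For the nontrivial ($\Rightarrow$) direction, I would construct $\mL$ by induction on the dimension $n$. In the base case $n=1$ the matrix is a positive scalar $x_{11}$, and $\mL = [\sqrt{x_{11}}]$ works. For the inductive step, I would partition
\begin{equation*}
\mX = \begin{bmatrix} x_{11} & \mathbf{x}^\top \\ \mathbf{x} & \mX_{22} \end{bmatrix},\qquad
\mL = \begin{bmatrix} \ell_{11} & 0 \\ \boldsymbol{\ell} & \mL_{22} \end{bmatrix},
\end{equation*}
and match entries in $\mL\mL^\top$, obtaining $\ell_{11} = \sqrt{x_{11}}$ (well-defined since $x_{11} = e_1^\top \mX e_1 > 0$), $\boldsymbol{\ell} = \mathbf{x}/\ell_{11}$, and the reduced equation $\mL_{22}\mL_{22}^\top = \mX_{22} - \mathbf{x}\mathbf{x}^\top / x_{11}$. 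I would then apply the induction hypothesis to this Schur complement after verifying that it is itself symmetric and positive definite. Invertibility of $\mL$ then follows because its diagonal entries are the positive $\ell_{11}$ values produced at each recursion level, so $\det \mL \neq 0$.

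The main obstacle is the verification that the Schur complement $\mX_{22} - \mathbf{x}\mathbf{x}^\top/x_{11}$ inherits symmetric positive definiteness from $\mX$. Symmetry is immediate. For positive definiteness, I would use the congruence transform
\begin{equation*}
\begin{bmatrix} 1 & -\mathbf{x}^\top / x_{11} \\ 0 & \mI \end{bmatrix}
\mX
\begin{bmatrix} 1 & 0 \\ -\mathbf{x}/x_{11} & \mI \end{bmatrix}
= \begin{bmatrix} x_{11} & 0 \\ 0 & \mX_{22} - \mathbf{x}\mathbf{x}^\top/x_{11} \end{bmatrix},
\end{equation*}
which preserves positive definiteness (since the outer matrices are invertible), and then conclude that the lower-right block is positive definite. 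This closes the induction and yields the desired $\mL$.
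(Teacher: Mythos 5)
Your proof is correct and complete: the reverse direction via $v^\top\mX v=\lVert \mL^\top v\rVert^2$ and the forward direction by induction on the Schur complement $\mX_{22}-\mathbf{x}\mathbf{x}^\top/x_{11}$ (whose positive definiteness you justify by a congruence transformation) together establish the equivalence, and the invertibility of $\mL$ follows from the positive diagonal entries as you note. The paper itself offers no proof of this lemma, treating it as a standard linear-algebra fact used only as a feasibility test; your argument is the textbook one, and incidentally the same recursion, in block form, that the paper exploits in Section~\ref{sec:structure_exploit} to compute the Cholesky factor of the block-tridiagonal certificate matrix.
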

If the Cholesky decomposition of $\mM(\theta_{k+1},\kappa_{k+1})$ succeeds, then $\mM(\theta_{k+1},\kappa_{k+1}) \succ  0$ is feasible. Note that this feasibility check does not require additional computations since the Cholesky decomposition is also used to analytically determine the gradients for the gradient update steps.

\begin{remark}
    To achieve constraint satisfaction after an update step, in practice, we choose sufficiently small step sizes, that we decrease further for increasing iterations. Note that there are other computationally more expensive approaches to determine a step size that results in feasibility after an update step, e.g., line search methods to find the largest feasible step size. 
\end{remark}

\subsection{Training with Lipschitz bounds}\label{sec:structure_exploit}
In the following, we aim at solving the problem of training an NN with a bounded Lipschitz constant, enforcing the Lipschitz condition \eqref{eq:lipschitz_matrix_inequality} introduced in Subsection \ref{sec:ex_robust}. This training problem is taken from \cite{pauli2021training}, where it is solved using ADMM. We apply a more efficient training approach based on \eqref{eq:barrier}, where we in contrast to \cite{pauli2021training}, have the option to include the multiplier matrices $\Lambda_i$, $i=1,\dots,l$ as decision variables, yielding a bilinear matrix inequality. In addition, we exploit the block-banded structure of~\eqref{eq:lipschitz_matrix_inequality} to accelerate the training of the Lipschitz-bounded NN. The computational advantage becomes especially apparent in large-scale problems such as the training of WGANs, whose training problem we introduced in Subsection \ref{sec:ex_GANs}. 

To solve \eqref{eq:barrier} using backpropagation, we require the gradients of $\mathcal{L}_\mM$ with respect to $\theta$ and $\kappa$, wherein the gradient of $\psi(\mM) = \log\det\mM$ is given by $\nabla \psi(\mM) = \mM^{-1}$ \cite{boyd2004convex}. This means that, according to the chain rule, the gradients include the inverse of $\mM$ that we can determine efficiently using the Cholesky decomposition of $\mM=\mL\mL^\top$, yielding $\mM^{-1}=\mL^{-\top}\mL^{-1}$ for its inverse. We can further accelerate its computation exploiting the block-tridiagonal structure of~\eqref{eq:lipschitz_matrix_inequality}. We do this by finding the Cholesky decomposition
\begin{equation*}
    \begin{split}
        \mL=
        \begin{bmatrix}
            D_0 & 0        & \dots  &   0     \\
            R_0 & D_1      & \ddots & \vdots  \\
                &  \ddots  & \ddots & 0       \\
            0   &          & R_l    & D_{l+1}
        \end{bmatrix},
    \end{split}
\end{equation*}
where the individual blocks are computed by
\begin{equation*}\label{eq:triblock_cholesky}
    \begin{split}
        D_0 &= \chol( L^2 \mI ) = L \mI\\
        R_i &= - [ D_i^{-1} W_i^\top \Lambda_{i+1} ]^\top \quad {i=0,\cdots,l}  \\
        D_i &= \chol( 2\Lambda_i -  R_{i-1} R_{i-1}^\top ) \quad {i=1,\cdots,l} \\ 
        D_{l+1}  &= \chol( \mI - R_l R_l^\top ) ,
    \end{split}
\end{equation*}
from the individual blocks of~\eqref{eq:lipschitz_matrix_inequality}, where $\chol(\cdot)$ means the Cholesky decomposition. This way, we reduce the Cholesky decompotion of~\eqref{eq:lipschitz_matrix_inequality} to a sequence of $l+1$ Cholesky decompositons of smaller blocks, which is computationally favorable.

The inverse of a block-tridiagonal matrix, such as the matrix in~\eqref{eq:lipschitz_matrix_inequality}, is in general a full matrix. However, to compute the gradients, instead of computing the entire inverse $\mM^{-1}=\mL^{-\top}\mL^{-1}$, we only necessitate its block-tridiagonal entries which becomes apparent as follows. 
Let $\mM(x)=M_0+\sum_{i=1}^mx_iM_i$ where by $x_i\in\mathbb{R}$ we mean the $m$ scalar entries of $\mM$ that are functions of the decision variables $x=(\theta,\kappa)$, $M_i$ assigns them a position in $\mM$ and $M_0$ holds the constant terms of $\mM$. Using this notation, the gradient of $g(x_i) = \log \det(M_0+\sum_{i=1}^mx_iM_i)$ is $\nabla g(x_i) = \mathrm{tr}(M_i(M_0+\sum_{i=1}^m x_iM_i)^{-1})$, which implies that the non-zero entries in $M_i$ determine which terms of the inverse $\mM^{-1}$ are not cancelled out by the underyling matrix multiplication. According to \cite{asif2005block}, we set up
\begin{equation*}
    \begin{split}
        \mM^{-1}=
        \begin{bmatrix}
            S_0     & K_0^\top       & *        & \dots   &   *         \\
            K_0     & S_1            & K_1^\top & \ddots  & \vdots      \\
            *       & K_1            & S_2      & \ddots  & *           \\
            \vdots  & \ddots         &  \ddots  & \ddots  & K_l^\top    \\
            *       & \dots          & *        & K_l     & S_{l+1}
        \end{bmatrix},
    \end{split}
\end{equation*}
determining the relevant blocks of the inverse as follows
\begin{equation*} \label{eq:triblock_inverse}
    \begin{split}
       S_{l+1} & = ( D_{l+1} D_{l+1}^\top )^{-1} \\
       K_i &= - S_{i+1}^\top R_i D_i^{-1} \quad {i=l,\cdots,0} \\
       S_i &= ( D_i D_i^\top )^{-1} - K_i^\top R_i D_i^{-1}  \quad {i=l-1,\cdots,0}.
    \end{split}
\end{equation*}

Note that within this problem of training a Lipschitz-bounded NN based on the constraint \eqref{eq:lipschitz_matrix_inequality}, we can either treat the multiplier matrices $\Lambda_i$, $i=1,\dots,l$ (i) as decision variables or (ii) as hyperparameters that are set suitably before training, as done by \cite{pauli2021training}, which simplifies and convexifies the problem but introduces conservatism. The first option yields a bilinear matrix inequality (BMI) and the second a linear one. In the following, we distinguish the two variants by the names linear and bilinear barrier method. 
\begin{remark}
    In the bilinear case the semidefinite constraint becomes linear in the decision variables with a change of variables $(W_i, \Lambda_{i+1}) \to (\widetilde{W}_i,\Lambda_{i+1})$ in \eqref{eq:lipschitz_matrix_inequality}, where $\widetilde{W}_i := \Lambda_{i+1} W_i$, $i=0,\dots,l-1$. This way, additional nonlinearity is added to the NN.
\end{remark}

\section{Experiments}\label{sec:experiments}
In this section, we apply the presented method for NN training and illustrate its advantages. For the first example (Subsection \ref{sec:ex_2D}), we use a handcrafted optimizer programmed in C++ and for the MNIST (Subsection \ref{sec:ex_MNIST}), we use the machine learning library `Tensorflow' from Google. All simulations except the training of WGANs (Subsection \ref{sec:ex_WGANs}) are executed on an Intel i5-4670 with 16\,GB RAM\footnote{The code is available at \scriptsize{\url{https://github.com/eragon10/neural_network_training_with_matrix_inequality_constraints.git}} and \scriptsize{\url{https://github.com/eragon10/train-neural-networks-with-lipschitz-bound.git}}.}. The WGAN training is implemented in Pytorch and processed on the BwUniCluster where the NVIDIA Tesla V100 is used as a GPU for parallel computations. 

\begin{table}
    \centering
    \begin{tabular}{ l l | c c c c }
        \toprule
        &\textbf{method}                             & $L_\mathrm{mean}$ & $L_\mathrm{max}$ & Accuracy & $T$ \\
        \midrule
         \parbox[t]{2mm}{\multirow{5}{*}{\rotatebox[origin=c]{90}{2D example}}}
        &nominal                                     & $10780$ & $28197$ & $0.868$ & $1$ \\
        &projected                                   & $48.25$ & $49.54$ & $0.902$ & $7.35$ \\
        &ADMM \cite{boyd2004convex}                  & $46.08$ & $49.10$ & $0.851$ & $78.72$ \\ 
        &barrier (bilinear)                          & $43.94$ & $44.68$ & $0.903$ & $1.15$ \\
        &barrier (linear)                            & $47.99$ & $48.07$ & $0.904$ & $1.30$ \\
        \midrule
        \parbox[t]{2mm}{\multirow{4}{*}{\rotatebox[origin=c]{90}{MNIST}}}
        & nominal                                    & $597.13$ & $639.44 $ & $0.978$ & $1$ \\ 
        & barrier (bilinear)                         & $16.695$ & $16.761$ & $0.982$ & $1.74$ \\ \cmidrule{2-6}
        & nominal$^*$                                & $998.55$ & $1001.4$ & $0.962$ & $1$ \\
        & ADMM$^*$                                   & $7.802$ & $7.837$ & $0.883$ & $104.4$ \\ \bottomrule  
    \end{tabular}
    \caption{Simulation results for Lipschitz-bounded NNs with mean Lipschitz upper bound $L_\mathrm{mean}$ for 20 differently initialized NNs, maximum Lipschitz upper bound $L_\mathrm{max}$, accuracy on test data, and normalized mean training time~$T$. \hfill {\footnotesize{$^*$ based on \cite{pauli2021training}}}}
    \label{tab:simulation_results}
\end{table}

\subsection{Lipschitz-bounded NNs}
\subsubsection{Simple 2D example}\label{sec:ex_2D}
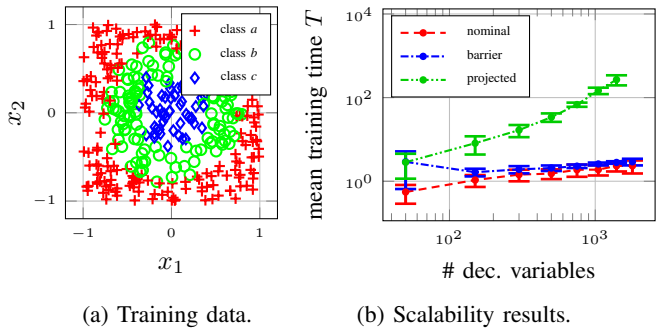
\begin{figure}
     \centering
     \begin{subfigure}[b]{0.18\textwidth}
         \centering
         \pgfplotsset{width=4.4cm,height=4.4cm,compat=1.8}


\hspace*{-0.75cm}
\begin{tikzpicture}[baseline]
    \pgfplotstableread{figures/training_data.csv}\sampledatatraining;
    

    
    \begin{axis}[legend style={legend columns=1,draw=black,anchor=west,align=center,
                        column sep=1cm,
                        /tikz/every odd column/.append style={column sep=0.2cm} },
            legend entries={ class \textit{a}, class \textit{b}, class \textit{c}},
            legend style={font=\tiny},
            ticklabel style={font=\tiny},
            enlarge x limits=true,
            enlarge y limits=true,
            trim axis right,
            trim axis left,
            enlargelimits,clip=false, 
            legend pos=north east,
            axis lines=box,
            grid=major,
            axis line style = {-latex},
            xlabel={$x_1$},
            ylabel={$x_2$},
            xmin=-1, xmax=1,
            ymin=-1, ymax=1,
            y label style={at={(-0.15,0.5)}},
        ]
        
            \addplot [
                scatter,only marks, scatter src=explicit symbolic,
                scatter/classes={
                    a={mark=+,red,thick},
                    b={mark=o,green,thick},
                    c={mark=diamond,blue,thick}
                }
            ] table[x=x,y=y,meta=label] {\sampledatatraining };
            
         \end{axis} 
    
%
%

   

\end{tikzpicture}
\vspace*{0.23em}

         \caption{Training data.}
         \label{fig:training_data}
     \end{subfigure}
     \begin{subfigure}[b]{0.25\textwidth}
         \centering
         \pgfplotsset{width=5.2cm,height=4.4cm,compat=1.7}



\begin{tikzpicture}[baseline]  
    
    \begin{axis} [ylabel={mean training time $T$},
                  xlabel={\# dec. variables},
                  enlarge x limits=true,
                  enlarge y limits=true,
                    trim axis right,
                trim axis left,
                ticklabel style={font=\tiny},
                label style={font=\small},
                 enlargelimits,clip=false, 
                  grid, ymode=log, ymax=5000, ymin=0.3,
                  legend cell align={left},
                  legend entries={nominal, barrier, projected},
                  legend pos=north west,
                  legend columns=1, xmode=log,
                  legend style={column sep=0.2cm, font=\tiny},
                  every error bar/.style={line width=0.4mm}]
                  
        \pgfplotstableread{figures/scale_nominell.csv}\timetablenominell;
        \addplot[red,thick, mark options={mark size=1pt, solid},] 
                    plot [densely dashed, mark=*, error bars/.cd, y dir=both, y explicit,
                            error bar style={solid},
                            error mark options={line width=1pt,mark size=4pt,rotate=90}] 
                    table [x=n, y=meantime, y error=sdtime]{\timetablenominell};
                    
        \pgfplotstableread{figures/scale_barriere.csv}\timetablebarriere;
        \addplot[blue,thick,mark options={mark size=1pt, solid},]
                    plot [densely dash dot, mark=*, error bars/.cd, y dir=both, y explicit,
                            error bar style={solid},
                            error mark options={line width=1pt,mark size=4pt,rotate=90}]
                    table [x=n, y=meantime, y error=sdtime]{\timetablebarriere};

                    
        \pgfplotstableread{figures/scale_projection.csv}\timetableprojection;
        \addplot[green!80!black,thick,mark options={mark size=1pt, solid},]
                    plot [densely dash dot dot, mark=*, error bars/.cd, y dir=both, y explicit,
                            error bar style={solid},
                            error mark options={line width=1pt,mark size=4pt,rotate=90}]
                    table [x=n, y=meantime, y error=sdtime]{\timetableprojection};
    
    \end{axis} 
    
\end{tikzpicture}

         \caption{Scalability results.}
         \label{fig:toy_scaleability}
     \end{subfigure}
     \caption{Training data and scalability results on 3-class 2D example for increasing number of decision variables.}
\end{figure}
We consider a simple 2D classification problem whose training data is shown in Fig.~\ref{fig:training_data}. On these data, we train feedforward NNs with two hidden layers of $10$ neurons each and activation function $\tanh\in\slope[0,1]$, using different training methods. We employ the optimizer ADAM \cite{kingma2014adam} and we enforce an upper bound on the NN's Lipschitz constant of $50$. In Table~\ref{tab:simulation_results}, we compare the barrier method proposed in this paper, wherein we exploit the structure of~\eqref{eq:lipschitz_matrix_inequality}, cf. Sect. \ref{sec:structure_exploit}, to (i) the ADMM approach suggested by \cite{pauli2021training}, and to (ii) projected gradient descent (PGD), i.e., the variables are projected into the feasible set after every iteration \cite{boyd2004convex,drummond2004projected}. 
Our approach achieves better accuracy on test data than the nominal NN and the NN trained using ADMM and it has shorter training times than the two alternative methods. In Fig.~\ref{fig:toy_scaleability}, we show NNs for increasing weight dimension. We see that, for this toy example, the nominal and barrier training have similar scalability whereas the PGD method scales orders of magnitude worse due to the bad scalability of the underlying SDP for projection.

\subsubsection{MNIST}\label{sec:ex_MNIST}
Next, we train a classifying NN on the MNIST dataset \cite{deng2012mnist}, using $14 \times 14$ pixel images as input data. Again, we compare our results to the ADMM approach from \cite{pauli2021training}, based on their original code. In our implementation of the barrier method, we include $\Lambda_i,~i=1\dots,l$ as decision variables, i.e., we use the bilinear barrier method. We use two hidden layers of 100 and 30 neurons, respectively, the activation function $\tanh$ and the optimizer ADAM. We set the upper bound of the Lipschitz constant to $20$. Table~\ref{tab:simulation_results} shows that the barrier method significantly outperforms the ADMM method with respect to training times and again achieves better accuracy on a test dataset. Note that, for comparibility, we show normalized training times for both methods (normalized to the nominal training for each framework (Tensorflow, PyTorch)).

\subsubsection{Wasserstein GANs}\label{sec:ex_WGANs}
\begin{table}
    \centering
    \begin{tabular}{ l | c c c}
        \toprule
        \textbf{method}   & $L_\mathrm{mean}$ & $L_\mathrm{max}$ & $T$ \\\midrule
        WC        & 0.0053 & 0.0061 & 1 \\ 
        GP        & 1,234 & 2,161 & 1.153\\  
        barrier (LMI) & 1.395 & 1.519 & 2.405\\
        barrier (BMI) & 1.381 & 1.495 & 2.419\\
        \bottomrule
    \end{tabular}
    \caption{Simulation results for WGAN training with mean Lipschitz upper bound $L_\mathrm{mean}$ for 5 differently initialized NNs, maximum Lipschitz upper bound $L_\mathrm{max}$, normalized mean training time~$T$. \label{tab:WGAN}}
\end{table}

\begin{figure}
     \centering
     \begin{subfigure}[b]{0.15\textwidth}
         \centering
         \includegraphics[width=\textwidth]{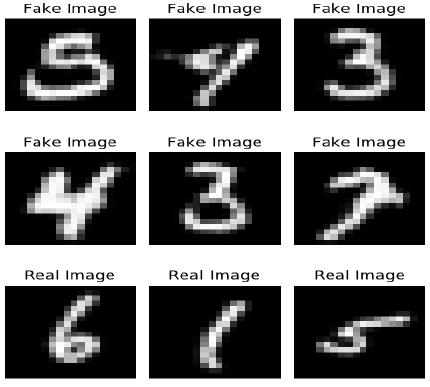}
         \caption{Matrix inequality (barrier).}
         \label{fig:barrier_lmi}
     \end{subfigure}
     \hfill
     \begin{subfigure}[b]{0.15\textwidth}
         \centering
         \includegraphics[width=\textwidth]{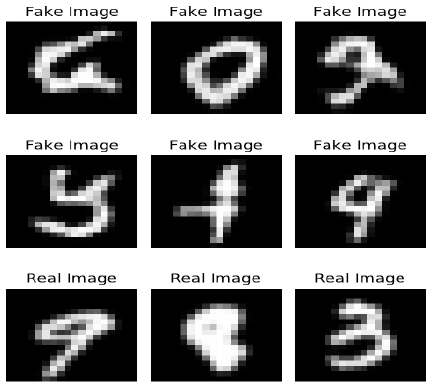}
         \caption{Weight clipping (WC).}
         \label{fig:weightclipping}
     \end{subfigure}
     \hfill
     \begin{subfigure}[b]{0.15\textwidth}
         \centering
         \includegraphics[width=\textwidth]{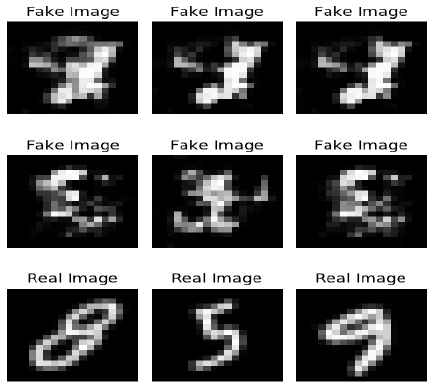}
         \caption{Gradient penalty (GP).}
         \label{fig:gradientnormpenalty}
     \end{subfigure}
        \caption{Simulation results and real and fake images generated by WGAN trained on MNIST data over 100 epochs each using different methods to enforce the $1$-Lipschitz constraint.}
        \label{fig:WGAN}
\end{figure}

While the training of robust NNs was addressed before in the literature \cite{pauli2021training}, the popular large-scale problem of WGAN training has not yet been implemented using semidefinite constraints. Our method provides an alternative to state-of-the-art methods for WGAN training, i.e., weight clipping \cite{arjovsky2017wasserstein} and gradient penalty \cite{gulrajani2017improved}. We compare the three methods in the following. To apply our training scheme to \eqref{eq:WGAN_prob}, we rewrite the convolutional layers in the discriminator as feedforward layers with sparse matrices according to Lemma~\ref{lem:conv}. To train the WGAN, we solve \eqref{eq:WGAN}, wherein we use the constraint \eqref{eq:lipschitz_matrix_inequality} to upper bound the Lipschitz constant of the discriminator by $1$, while exploiting the structure of \eqref{eq:lipschitz_matrix_inequality} according to Section~\ref{sec:structure_exploit} to accelerate the training. We train on $18\times 18$ images on the MNIST dataset \cite{deng2012mnist}, using the same architecture for all three methods and default and recommended hyperparameters. We select a fully connected and convolutional (FCC) WGAN architecture, where, for simplicity, in the generator and discriminator architectures, the bias terms are set to zero in all convolutional and deconvolutional layers. We choose an architecture with a 3-hidden-layer discriminator without batch normalization layers and a 5-hidden-layer generator architecture, where the generator activations are ReLU and $\tanh$ in the output layer and leaky ReLU with slope 0.2 and sigmoid in the discriminator, respectively.

As shown in Table \ref{tab:WGAN}\footnote{We state conservative upper bounds on the Lipschitz constant as solving an SDP based on \eqref{eq:lipschitz_matrix_inequality} is intractable for FCC-WGAN. We calculate the bounds based on \eqref{eq:lipschitz_matrix_inequality} with $\Lambda=\lambda \mI_n$ with sclalar decision variable $\lambda\geq 0$ and we split the NN in two and consequently multiply the Lipschitz bounds of the respective parts of the NN.}, weight clipping keeps the Lipschitz constant conservatively small whereas using a gradient penalty the estimated upper bound on the Lipschitz constant is much larger than $1$, which suggests that the Lipschitz condition may be violated. Thus, to effectively incorporate the constraints using the state-of-the-art methods, the hyperparameters have to be adjusted carefully. Our training approach based on a semidefinite constraint satisfies the Lipschitz constraint without any additional hyperparameter optimization. While an update step using our method is computationally more demanding, yet of the same order of magnitude as the other two, the single update steps are more efficient and therefore, training requires less epochs, cmp. the results after 100 epochs in Fig.~\ref{fig:WGAN}. In particular, in Fig.~\ref{fig:barrier_lmi}, we observe that the generator trained from the proposed barrier method produces fake images that look similar to the real data, whereas the other two methods in Figs.~\ref{fig:weightclipping} and \ref{fig:gradientnormpenalty} produce pictures that hardly look like digits from 0 to 9.

\section{Conclusions}
In this paper, we developed a training scheme for NNs considering semidefinite constraints. These constraints enforce desired properties onto the NN, such as Lipschitz continuity or closed-loop stability using an NN controller. To efficiently solve the underlying optimization problem, we developed an interior point method using logarithmic barrier functions and exploited the structure of the semidefinite constraints that enforce Lipschitz continuity. We illustrated the applicability to the training of classifying NNs with guaranteed Lipschitz bounds on the MNIST dataset and in addition, we trained WGANs guaranteeing with a semidefinite constraint that the discriminator NN is 1-Lipschitz.








\section*{ACKNOWLEDGMENT}
We acknowledge the support by the Stuttgart Center for Simulation Science (SimTech). The authors acknowledge support by the state of Baden-Württemberg through bwHPC. The authors thank the International Max Planck Research School for Intelligent Systems (IMPRS-IS) for supporting Patricia Pauli and Dennis Gramlich.


\bibliographystyle{IEEEtran}
\bibliography{references}

\end{document}